\documentclass[twoside,11pt]{jmlr}

\usepackage{amsmath,amssymb,mathtools,bm}
\usepackage{newtxtext,newtxmath}
\usepackage{booktabs,array,multirow}
\usepackage{enumitem}
\usepackage{graphicx}
\usepackage{microtype}
\usepackage{xcolor}
\usepackage{url}

\allowdisplaybreaks
\setlist[itemize]{leftmargin=1.35em,itemsep=2pt,topsep=4pt}
\setlist[enumerate]{leftmargin=1.6em,itemsep=2pt,topsep=4pt}
\newcolumntype{P}[1]{>{\raggedright\arraybackslash}p{#1}}

\newcommand{\cF}{\mathcal{F}}

\newcommand{\cC}{\mathcal{C}}

\newcommand{\bbF}{\mathbb{F}}
\newcommand{\bbR}{\mathbb{R}}
\newcommand{\im}{\operatorname{im}}
\newcommand{\rank}{\operatorname{rank}}
\newcommand{\dist}{\operatorname{dist}}

\title[Capability Sheaves for Agent-Harness Repair]{Capability Sheaves for Compositional Agent-Harness Repair:\\Controlled Quotients and a Real-Repository Stress Test}

\author[Batruin]{%
  \Name{Saveliy Batruin}\Email{saveliy.xo@gmail.com}\\
  \addr Independent researcher
}

\jmlrproceedings{}{Preprint}
\jmlrpages{}
\jmlryear{2026}
\hypersetup{pdftitle={Capability Sheaves for Compositional Agent-Harness Repair},pdfauthor={Saveliy Batruin},pdfsubject={Finite capability sheaves, hidden-state quotients, and real-repository patch fusion},pdfkeywords={AI agents, patch fusion, sheaf cohomology, global sections, constraint satisfaction, SWE-bench}}

\begin{document}
\maketitle

\begin{abstract}
Agent harnesses combine retrieval, routing, state, provenance, and
verification, but locally successful components may disagree on shared state.
We model this failure with a finite \emph{capability sheaf}: stalks encode typed
behavior signatures, restriction maps retain shared fields, and accepted runs
are useful global sections. An exact finite constraint-satisfaction problem
(CSP) defines acceptance, while a linearized relative cohomology class provides
a diagnostic and search feature.

A controlled experiment over 20 task clusters introduces hidden interior
mediators whose raw states are nuisance variables. Quotienting their
coboundaries reduces the candidate budget from 2,000 to 1,000 per cluster;
aligning the hidden state removes the gap. Exact CSP matches the quotient, so
the result demonstrates invariance to stale representatives, not superiority
over exact reasoning.

We then test the method on a discovery split from the SWE-bench Multilingual
pool of PatchFuseBench: 160 issues from 20 repositories, 875 real candidate
patches, 2,579 source-aware edit atoms, and 153 newly executed patches. A first
pool-level construction is constant because $[b-Dx]=[b]$ in
$\operatorname{coker}D$ and therefore cannot rank configurations. A
candidate-indexed repair is nontrivial on 848/875 candidates and varies within
120/160 issues. It resolves 118 issues versus 116 for a matched
noncohomological selector, but the difference is not supported across
repositories (exact sign-flip $p=0.75$). A leave-one-repository-out abstention
gate reaches 127/160, tying the strong anchor and exceeding its matched gate by
one issue ($p=1.0$). The discovery gate therefore fails and the confirmatory
split remains sealed. The study supports the controlled invariance mechanism
and an identifiability correction, but not a real-world cohomological
advantage.
\end{abstract}

\begin{keywords}
AI agents, agent harness optimization, sheaf cohomology, global sections,
constraint satisfaction, automated agent design, reproducible evaluation
\end{keywords}

\section{Introduction}
\label{sec:intro}

The behavior of a language-model agent depends on its harness: instructions,
retrieval, routing, state, provenance, verification, and recovery. Current
systems optimize prompts, programs, workflows, or agent designs through
outer-loop search
\citep{khattab2023dspy,hu2025adas,zhang2025aflow,lee2026metaharness,
ursekar2026vero,chen2026harnessfix}. Scalar scores rank complete candidates.
They do not explain a common failure: each required capability appears locally
available, but no execution combines the capabilities coherently.

Consider a repository patch agent. One subsystem can locate the owned file,
another can recover its declared API revision, a third can preserve the source
commit, and a verifier can select a public contract test. These answers may
refer to different files or revisions. Each subsystem looks competent; their
shared path, revision, commit, and test state do not glue. This is a
local-to-global failure of executable behavior.

Sheaf theory gives precise language for local data, shared restrictions,
agreement, and gluing \citep{bredon1997sheaf,maclane1992sheaves,curry2014sheaves}.
Here the objects are finite and executable. We use typed stalks and literal
field projections. An exact CSP decides semantic feasibility. Linear
cohomology provides a diagnostic and a search score. This separation matters:
a cohomological witness can be useful without fully deciding whether an
executable candidate exists
\citep{abramsky2011sheaf,abramsky2012cohomology,caru2017cohomology}.

The paper makes four contributions.
\begin{enumerate}
  \item We construct a finite capability sheaf for five requirements from
  disjoint typed traces. It includes an exact CSP, relative classes, and
  independently checked restriction maps.
  \item We prove exact gluing, local-score separation, spectral stability, and
  finite-trace repair recovery, while giving counterexamples to invalid
  converses of the linear relaxation.
  \item We report a controlled hidden-state experiment across 20 independent
  task clusters. Its aligned-state ablation tests the proposed invariance
  mechanism directly.
  \item We run a real patch-fusion stress test on 20 repositories. It includes
  official execution of new patches, clustered inference, an explicit
  identifiability counterexample, and a candidate-indexed repair.
\end{enumerate}

The claim is narrower than a general optimizer result. Exact CSP is the
decisive control. In the controlled task family, quotienting removes a nuisance
interior representative. In the real benchmark, the corrected quotient is
configuration-specific but does not pass the development gate. We report both
results because the failure marks the current boundary of the method.

\section{Capability Sheaves}
\label{sec:model}

\subsection{Finite behavior stalks and restrictions}

Let $X$ be a finite incidence graph whose vertices are the five registered
requirements
\[
L=\text{localization},\quad C=\text{contract},\quad O=\text{ordering},
\quad P=\text{preservation},\quad V=\text{verification}.
\]
A cellular sheaf $\cF$ assigns a finite behavior-signature stalk $\cF(v)$ to
each vertex, an overlap stalk $\cF(e)$ to each edge, and a restriction
$\rho_{v\to e}:\cF(v)\to\cF(e)$ for every incidence. The vertex signatures are
\begin{center}
\begin{tabular}{ll}
\toprule
requirement & typed signature fields\\
\midrule
localization & file path, namespace alias, symbol\\
contract & file path, API revision, source commit\\
ordering & API revision, edit order, test identifier\\
preservation & file path, source commit, namespace alias\\
verification & file path, symbol, test identifier\\
\bottomrule
\end{tabular}
\end{center}
The learned sparse cover has six overlaps: $L$--$V$ shares file and symbol;
$L$--$P$ shares file and namespace; $C$--$P$ shares file and commit;
$O$--$V$ shares the test identifier; $P$--$V$ shares file; and $C$--$O$
shares API revision. Every restriction is literal field projection.

For one target in the baseline complex, a boundary behavior is a tuple
$s_A=(s_v)_{v\in X^{(0)}}$ on the vertex subcomplex $A=X^{(0)}$. Each vertex has a registered good subset
$G_v\subseteq\cF(v)$. Compatibility and local usefulness are distinct:
\[
s_v\in G_v\quad\text{and}\quad
\rho_{v\to e}(s_v)=\rho_{w\to e}(s_w)\ \text{for }e=vw.
\]

\subsection{Exact CSP and gluing}

For a candidate harness $c$, let $s_v(c)$ be its selected local behavior. The
exact feasibility predicate is
\begin{equation}
\label{eq:exact}
\Phi(c)=
\bigwedge_{v\in X^{(0)}}[s_v(c)\in G_v]
\ \land\!
\bigwedge_{e=vw\in X^{(1)}}
[\rho_{v\to e}s_v(c)=\rho_{w\to e}s_w(c)].
\end{equation}
For finite stalks this is an ordinary finite CSP. The sheaf formulation does
not remove combinatorial hardness; it separates local existence from gluing
and supplies typed linear diagnostics across candidates.

\begin{theorem}[Augmented feasibility]
\label{thm:gluing}
Let $\{U_i\}$ be a cover of a task domain and let $\cF$ be a sheaf of feasible
behaviors on that cover.  Suppose (i) the local predicate for $U_i$ is true
exactly when the selected section $s_i\in\cF(U_i)$ exists and is useful, and
(ii) every overlap needed for the sheaf gluing axiom is registered.  Then a
candidate represents a global section accepted by the cover-local predicate
in Equation~\eqref{eq:exact} if and only if every local predicate holds and
every registered pair of restrictions agrees.
\end{theorem}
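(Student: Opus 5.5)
The plan is to unfold both sides of the biconditional into the same finite conjunction and apply the sheaf gluing axiom once, in each direction. First fix notation. A candidate $c$ selects a family $(s_i)_i$ with $s_i\in\cF(U_i)$; for the capability sheaf these are the vertex behaviors $s_v(c)$. ``Represents a global section accepted by the cover-local predicate'' then means two things: there is a global section $s\in\cF(\bigcup_i U_i)$ with $s|_{U_i}=s_i$ for all $i$, and every local usefulness predicate holds for these $s_i$. Hypothesis (i) identifies the local predicate with the membership conditions $[s_v(c)\in G_v]$ in Equation~\eqref{eq:exact}. So the only content is matching the gluing half of the statement with the edge conjuncts of Equation~\eqref{eq:exact}.

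\emph{Forward direction.} Suppose $c$ represents an accepted global section $s$. Local usefulness holds by definition of acceptance together with (i). For any registered overlap $e$ between $U_i$ and $U_j$, functoriality of restriction gives
\[
\rho_{i\to e}(s_i)=\rho_{i\to e}\bigl(s|_{U_i}\bigr)=s|_{e}=\rho_{j\to e}\bigl(s|_{U_j}\bigr)=\rho_{j\to e}(s_j).
\]
Hence every registered pair of restrictions agrees, so every conjunct of Equation~\eqref{eq:exact} is true.

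\emph{Reverse direction.} Assume every local predicate holds and every registered pair agrees. By (i), each $s_i$ exists in $\cF(U_i)$ and is useful. By (ii), the registered overlaps include every overlap that the gluing axiom quantifies over, so pairwise agreement holds on all of them. The gluing axiom then produces a global section $s$ with $s|_{U_i}=s_i$. Since its restrictions are exactly the useful selected sections, $s$ is accepted. The identity (locality) axiom also shows that $s$ is unique, so ``the'' global section represented by $c$ is well defined.

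\emph{Main obstacle.} The delicate step is the appeal to (ii) in the reverse direction. Agreement on a sparse registered cover, such as the six listed overlaps, need not imply agreement on every pairwise intersection the gluing axiom demands. I would state the convention explicitly. For the cellular sheaf on the incidence graph $X$, the gluing axiom concerns only the edge stalks $\cF(e)$, and a global section is exactly a vertex tuple compatible along every edge of $X^{(1)}$. In that setting (ii) simply says that the registered overlaps are the edges of $X$. I would also note that if some overlap were unregistered, the reverse implication could fail: two vertices could disagree on a shared field that no edge checks. This shows that (ii) is exactly the hypothesis being used.
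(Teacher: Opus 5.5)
Your proposal is correct and follows the paper's proof. Hypothesis (i) supplies the useful local sections, and (ii) upgrades registered overlap agreement to every agreement the gluing axiom needs, so gluing (with locality for uniqueness) yields the accepted global section; conversely, restricting an accepted section back to the cover gives the local and overlap conditions, and your remark that an unregistered overlap breaks the reverse implication matches the paper's closing note on why the hypotheses are necessary.
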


\begin{proof}
If all local predicates hold, assumption (i) supplies the selected local
sections.  Assumption (ii) turns the overlap bits into equality of every pair
of restrictions needed by the cover.  The sheaf gluing axiom gives a unique
global section with those restrictions, and this section is accepted by the
conjunction in Equation~\eqref{eq:exact}.  Conversely, restrictions of one
accepted global section exist locally, satisfy the registered predicates by
assumption (i), and agree on every overlap.  Both assumptions are necessary:
missing local sections can make compatibility vacuous, and an omitted overlap
can hide a conflict.
\end{proof}

\subsection{Relative connecting obstruction}

Represent each finite field value by an exact one-hot basis vector over
$\bbF_2$. The cellular coboundary is
\begin{equation}
\label{eq:delta}
(\delta^0 x)_e=\rho_{v\to e}x_v+\rho_{w\to e}x_w,
\qquad e=vw.
\end{equation}
The short exact sequence of the pair $(X,A)$ induces a connecting map, and the
boundary section has class
\begin{equation}
\label{eq:relative}
\partial[s_A]=[\delta^0s_A]\in H^1(X,A;\cF).
\end{equation}
Because $A$ contains every vertex, $C^0(X,A;\cF)=0$; hence this class vanishes
exactly when adjacent typed restrictions agree. It is a genuine relative
cellular-sheaf class for the declared pair, although in this experiment it is
algebraically a structured edge-mismatch vector rather than a quotient over
latent interior completions.

The independently reconstructed one-hot complex has
\[
\dim C^0=246,\qquad \dim C^1=233,\qquad \rank\delta^0=173,
\]
so $\dim H^1(X;\cF)=60$ and $\dim H^1(X,A;\cF)=233$. These dimensions
describe the representation, not failure prevalence.

\subsection{Hidden interior states and a nontrivial quotient}
\label{sec:hidden-quotient}

The first experiment fixes $A=X^{(0)}$. It has no relative degree-zero
cochains, so its class is the raw boundary mismatch. The second experiment
changes this geometry. For every typed overlap coordinate $j$, we insert a
hidden mediator vertex between the two public requirement vertices. The public
vertices form the boundary $A$; the mediators lie in $X\setminus A$.

Let $V_j$ be the one-hot vector space for coordinate $j$. If the public values
are $\ell_j,r_j\in V_j$ and the hidden value is $h_j\in V_j$, one extension of
the boundary has relative residual
\begin{equation}
\label{eq:hidden-residual}
q_j(h_j)=(\ell_j+h_j,\;h_j+r_j)\in V_j\oplus V_j.
\end{equation}
Changing $h_j$ by $u$ adds the interior coboundary
\[
D_j u=(u,u),\qquad D_j=\begin{bmatrix}I\\ I\end{bmatrix}.
\]
Over $\bbF_2$, the map $P_j=[I\ I]$ satisfies $P_jD_j=0$ and
$\ker P_j=\im D_j$. Hence
\begin{equation}
\label{eq:hidden-quotient}
Q_j=(V_j\oplus V_j)/\im D_j\cong V_j,
\qquad [q_j(h_j)]\longmapsto \ell_j+r_j.
\end{equation}
The quotient has positive dimension $\dim V_j$, and its class does not depend
on the hidden state. Taking direct sums over coordinates gives
$C^1(X,A;\cF)/\im\delta^0_{\mathrm{int}}$. For actual one-hot endpoints, the
class is zero exactly when $\ell_j=r_j$ on every registered coordinate.

\begin{figure}[t]
\centering
\includegraphics[width=0.98\textwidth]{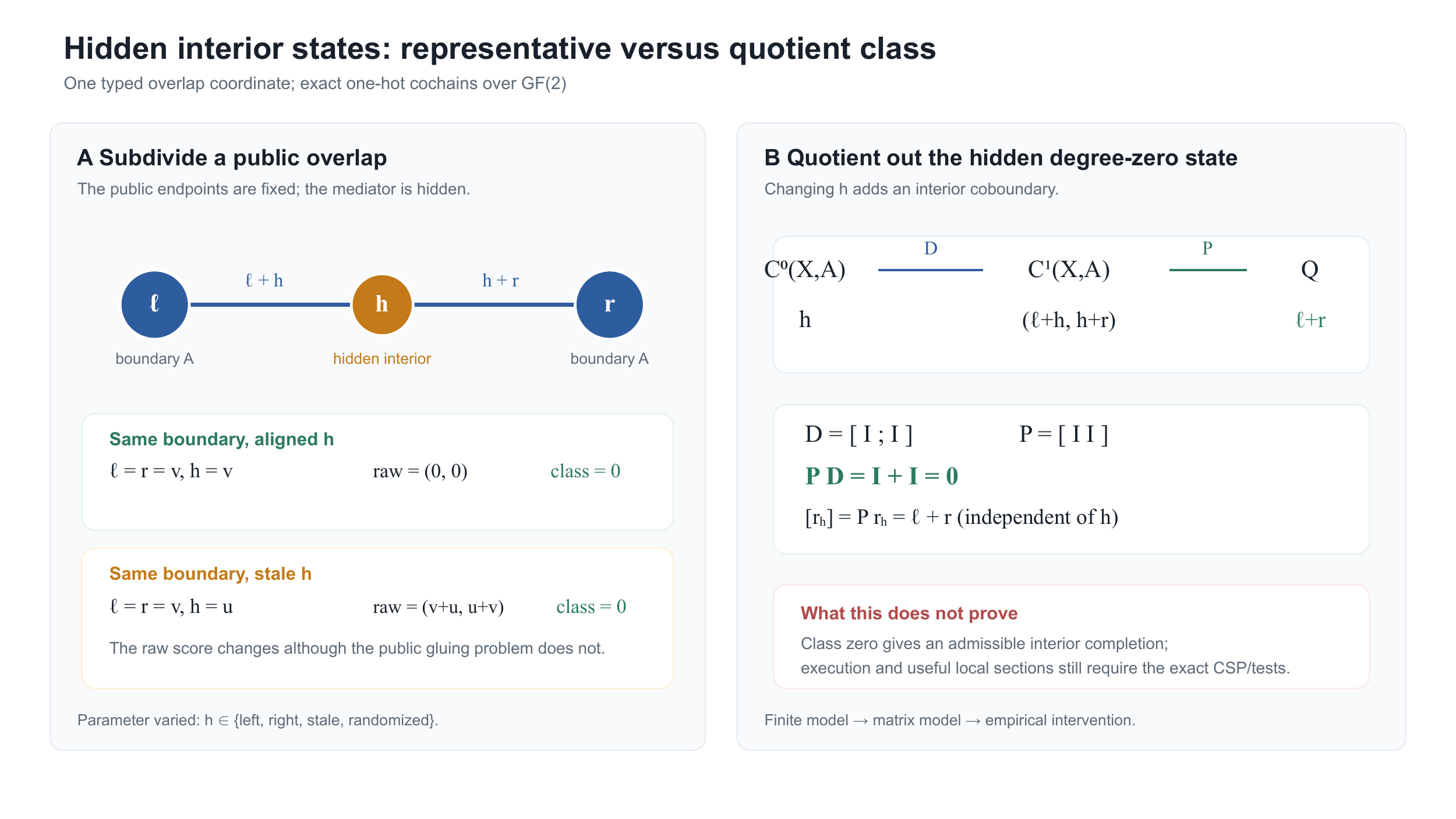}
\caption{A finite and matrix model of the hidden-state quotient. The raw
half-edge residual changes when the mediator state $h$ changes. The quotient
map removes this interior coboundary and keeps only the endpoint disagreement
$\ell+r$. The picture does not replace the exact CSP: class zero checks the
registered gluing fields, while useful local sections and execution are
separate requirements.}
\label{fig:hidden-design}
\end{figure}

This construction supports a direct intervention. We hold the public task and
candidate fixed and vary the mediator state: stale, aligned with the left
endpoint, aligned with the right endpoint, or randomized. The quotient ranking
should stay fixed. A raw half-edge score may change. The aligned condition is a
negative control: when $h_j=\ell_j$, the raw mismatch already equals the public
endpoint mismatch, so quotienting should not improve the stopping budget.

\subsection{What linearization does not prove}

For an actual one-hot boundary, Equation~\eqref{eq:relative} exactly tests the
registered compatibility fields. For a portfolio of executable candidates,
one may instead form signatures $\sigma(c)\in\bbF_2^d$, their span $S$, and a
quotient class $[t]\in\bbF_2^d/S$ for a target signature $t$.

\begin{proposition}[One-sided portfolio certificate]
\label{prop:onesided}
If $[t]\ne0$, then no candidate has signature $t$.  The converse is false.
\end{proposition}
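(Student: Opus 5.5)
The forward direction is a contrapositive that uses only the definitions. Suppose some candidate $c$ in the portfolio has $\sigma(c)=t$. By definition $S$ is the $\bbF_2$-span of $\{\sigma(c')\}$, so $t=\sigma(c)\in S$. Then the image of $t$ under the quotient map $\bbF_2^d\to\bbF_2^d/S$ is zero, that is, $[t]=0$. Equivalently, $[t]\ne0$ forces $t\notin S$, and in particular $t$ is not equal to any generator $\sigma(c)$. I will state this as the one-line contrapositive. I will also note that it holds for any field, and that nothing about one-hot structure or the CSP~\eqref{eq:exact} is needed.

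For the failure of the converse, I will exhibit a finite counterexample in which $[t]=0$ but no candidate has signature $t$. The simplest choice is $d=2$ with two candidates, $\sigma(c_1)=e_1$ and $\sigma(c_2)=e_2$, and target $t=e_1+e_2$. Then $S=\bbF_2^2$, so $[t]=0$, yet neither candidate has signature $t$.

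To make the counterexample meaningful in the capability-sheaf setting, I will phrase it with one-hot fields, so that $t$ is an actual glued one-hot signature. Take two fields, each with two values, so $d=4$ with coordinates $(a_0,a_1,b_0,b_1)$. Let the candidates have signatures $(a_0,b_0)$, $(a_1,b_0)$ and $(a_0,b_1)$, and let the target be $(a_1,b_1)$. Over $\bbF_2$ we have
\[
(a_1,b_1)=(a_1,b_0)+(a_0,b_1)+(a_0,b_0),
\]
because the $a_0$ terms cancel and the $b_0$ terms cancel. So $t\in S$ and $[t]=0$, while no candidate realizes $(a_1,b_1)$. I will also stress the conceptual reason: the span admits formal XOR combinations of signatures, but a single executable candidate is not a linear combination of others. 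Linear membership is therefore a relaxation of the exact predicate.

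The only step needing care is the verification of the $\bbF_2$ identity in the second example, including checking that every coordinate cancels mod 2. It is routine, so I expect no real obstacle. The main design decision is whether to present the trivial two-vector example or the one-hot one. I would give the one-hot example, since it matches the paper's stated aim of showing counterexamples to invalid converses of the linear relaxation.
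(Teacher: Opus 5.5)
Your proposal is correct and follows the paper's proof: the forward direction uses the same observation that any realized signature lies in $S$, so $[t]=0$, and your primary counterexample, $(1,0),(0,1)$ with target $(1,1)$, is exactly the paper's. Your extra one-hot example also checks out coordinatewise mod~2, and it is a harmless strengthening: it shows the converse fails even when the target is a legitimate one-hot configuration.
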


\begin{proof}
An exact candidate would put $t$ in $S$, so a nonzero quotient is a sound
no-candidate certificate.  For the converse, take candidates with signatures
$(1,0)$ and $(0,1)$ and target $(1,1)$.  The target lies in their linear span,
but neither candidate realizes it.  Therefore quotient vanishing must be
followed by exact membership or execution.
\end{proof}

Quotient vanishing must therefore be followed by exact candidate membership or
execution. Likewise, compatibility cannot substitute for missing useful local
sections, which is why Equation~\eqref{eq:exact} retains both obligations.

\section{Repair and Search}
\label{sec:repair}

\subsection{Typed fillers and obstruction-guided completion}

The registered library contains five transformations:
\begin{itemize}
  \item a symbol-index router exposing symbol-to-file retrieval;
  \item a revision ledger exposing immutable file-to-commit provenance;
  \item a workspace partition exposing deployment-to-workspace routing;
  \item a dependency checkpoint exposing API revision and edit order; and
  \item a test-oracle verifier executing public tests with at most one
  correction call.
\end{itemize}
The common candidate pool consists of all nonempty subsets of size at most
three, hence 25 bundles. A filler changes typed evidence and the induced local
sections; after application we recompute both the exact CSP and the relative
class.

The outcome-blind completion operator enumerates the pool and lexicographically
minimizes relative obstruction weight, maximizes exact-lift fraction and
nomination margin, and then minimizes filler cost. It selects workspace
partition plus dependency checkpoint at cost 3 in every task replicate. Before
the proposal, the registered all-local structural decoy has nonzero relative
weight and no exact lift. After the proposal, relative weight is zero and the
exact-lift fraction is one. No confirmatory outcome enters this computation.

\subsection{Information separation}

\begin{theorem}[Local-score repair-order separation]
\label{thm:separation}
Let $c_1,\ldots,c_K$ have identical isolated local signatures.  Suppose exactly
one, $c_J$, has compatible restrictions and every other candidate has a
nonzero exact relative class.  A top-one selector whose observation is only
the isolated signature has minimax success at most $1/K$ under arbitrary
relabeling of the candidates.  A zero-class-first selector given the exact
restriction classes ranks $c_J$ first.  For any ordering independent of the
hidden feasible label, under a uniform random permutation the expected rank of
$c_J$ is $(K+1)/2$.
\end{theorem}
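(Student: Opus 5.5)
The theorem has three independent claims, and I will prove them separately: a minimax upper bound for signature-only selectors, a deterministic ranking claim for the zero-class-first selector, and an expected-rank identity for label-independent orderings.

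\emph{Minimax bound.} I formalize a selector whose only observation is the isolated signature as a (possibly randomized) map from the observed multiset of signatures to a distribution over indices $\{1,\ldots,K\}$. Since all $c_k$ share one signature $\sigma$, the observation is the constant tuple $(\sigma,\ldots,\sigma)$, whatever the position $J$ of the feasible candidate. The selector's output distribution $\pi$ is therefore the same for every instance in the relabeling family $\{J=1,\ldots,K\}$. Its success probability on instance $J$ is $\pi(J)$. Because $\sum_J \pi(J)=1$, we get $\min_J \pi(J)\le 1/K$. So the adversary choosing the relabeling forces success at most $1/K$. Equivalently, averaging over a uniformly random $J$ gives exactly $1/K$, and the minimax value is bounded by the average. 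The only delicate point is making \emph{arbitrary relabeling} precise: the adversary chooses which index carries the feasible candidate, and the observation cannot depend on that choice. I expect this modelling step, not any calculation, to be the main obstacle.

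\emph{Zero-class-first ranking.} By hypothesis $c_J$ has compatible restrictions. By the discussion after Equation~\eqref{eq:relative}, with $A=X^{(0)}$ we have $C^0(X,A;\cF)=0$, so the relative class vanishes exactly when adjacent restrictions agree; hence $\partial[s_A(c_J)]=0$. Every other candidate has a nonzero class by hypothesis. A selector that sorts by the indicator $[\text{class}=0]$, zeros first, therefore puts $c_J$ alone in the first block, so it has rank one regardless of tie-breaking within the nonzero block.

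\emph{Expected rank.} Let the ordering be a fixed permutation $\tau$ chosen independently of the hidden label. Under a uniform random permutation of the labels, the position $J$ of the feasible candidate is uniform on $\{1,\ldots,K\}$ and independent of $\tau$. So the rank $\tau(J)$ is uniform on $\{1,\ldots,K\}$, and its expectation is $\frac1K\sum_{r=1}^K r=(K+1)/2$. The argument extends to randomized label-independent orderings by conditioning on the ordering's internal randomness.
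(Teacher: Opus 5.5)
Your proof is correct and follows the paper's argument. The common observation forces one output distribution $\pi$ for every placement of $J$, so $\min_J\pi(J)\le 1/K$ is the same averaging step the paper phrases through Yao's principle. The zero-class claim is read off the hypotheses, and the expected rank follows because the feasible candidate's position is uniform on $\{1,\ldots,K\}$. Your appeal to $C^0(X,A;\cF)=0$ only literally covers $A=X^{(0)}$, but the paper takes $c_J$'s zero class directly from the hypothesis, and the vanishing also holds in the hidden-mediator quotient, so nothing is lost.
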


\begin{proof}
All $K$ candidates induce the same observation for the local-only selector.
Place the unique feasible index uniformly on the $K$ labels.  Conditional on
the common observation, any deterministic top-one choice succeeds with
probability $1/K$; randomization cannot improve this average, so Yao's minimax
principle yields the bound against an adversarial relabeling.  Exact relative
classes identify the unique zero-class candidate by hypothesis.  Finally, the
position of one distinguished element in a uniform permutation is uniform on
$\{1,\ldots,K\}$ and has mean $(K+1)/2$.
\end{proof}

The theorem identifies the information missing from isolated scores. It does
not imply that a cohomological relaxation outperforms an exact CSP supplied
with the same restrictions. The experiment therefore treats exact CSP as the
semantic control and adaptive NSGA-II, local marginal, pairwise, shuffled, and
random orderings as equal-pool search baselines.

\section{Stability of an Estimated Linear Obstruction}
\label{sec:stability}

Exact typed restrictions need no estimation.  Broader applications may learn
a linear coboundary $D\in\bbR^{p\times q}$ and target $b\in\bbR^p$ from fresh
traces.  Naively using the full image of a noisy matrix is invalid because an
arbitrarily small perturbation can increase rank.

\begin{theorem}[Rank-truncated residual stability]
\label{thm:stability}
Let $D$ have registered rank $r\ge1$, image $U$, and smallest nonzero singular
value $\gamma=\sigma_r(D)>0$.  Suppose
\[
\|\widehat D-D\|_2\le\varepsilon<\gamma,
\qquad \|\widehat b-b\|_2\le\tau,
\qquad \|b\|_2\le B.
\]
Let $\widehat U_r$ be the leading $r$-dimensional left singular subspace of
$\widehat D$, and put
\[
\rho=\min\left\{1,\frac{2\varepsilon}{\gamma-\varepsilon}\right\},
\qquad T=\tau+\rho B.
\]
Then
\[
\left|\dist(\widehat b,\widehat U_r)-\dist(b,U)\right|\le T.
\]
Consequently, thresholding the estimated residual at $T$ accepts every true
lift and rejects every population nonlift whose residual is greater than $2T$.
\end{theorem}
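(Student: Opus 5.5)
The plan is to write each distance as the norm of an orthogonal projection residual and split the error into a target term and a subspace term. Let $\Pi_U$ and $\widehat\Pi_r$ be the orthogonal projectors onto $U$ and $\widehat U_r$. Then $\dist(b,U)=\|(I-\Pi_U)b\|_2$ and $\dist(\widehat b,\widehat U_r)=\|(I-\widehat\Pi_r)\widehat b\|_2$. By the triangle inequality,
\[
\bigl|\dist(\widehat b,\widehat U_r)-\dist(b,U)\bigr|
\le \|(I-\widehat\Pi_r)(\widehat b-b)\|_2+\|(\widehat\Pi_r-\Pi_U)b\|_2 .
\]
The first term is at most $\|\widehat b-b\|_2\le\tau$, because $I-\widehat\Pi_r$ is a contraction. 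The second term is at most $\|\widehat\Pi_r-\Pi_U\|_2\,B$. So it remains to show $\|\widehat\Pi_r-\Pi_U\|_2\le\rho$.

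Two facts give this bound. First, $\|\widehat\Pi_r-\Pi_U\|_2\le 1$ always holds, since both are orthogonal projectors of the same rank $r$, and the norm of their difference equals the sine of the largest principal angle. This handles the $\min\{1,\cdot\}$ branch. Second, for the nontrivial branch I would use Wedin's $\sin\Theta$ theorem for left singular subspaces. By Weyl's inequality for singular values, $\sigma_r(\widehat D)\ge\gamma-\varepsilon>0$, while $\sigma_{r+1}(D)=0$. The gap separating the top $r$ singular values of $\widehat D$ from the remaining ones of $D$ is therefore at least $\gamma-\varepsilon$.

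Wedin's theorem then bounds $\|\sin\Theta(\widehat U_r,U)\|_2$ by a ratio. Its numerator is $\max\{\|(\widehat D-D)V\|_2,\|(\widehat D-D)^{\top}U\|_2\}\le\varepsilon$, or the sum of the two residuals in the cruder version. Its denominator is $\gamma-\varepsilon$. Using the cruder form gives exactly $2\varepsilon/(\gamma-\varepsilon)$, which matches the stated $\rho$. Finally, $\|\widehat\Pi_r-\Pi_U\|_2=\|\sin\Theta\|_2$ for equal-dimensional subspaces, so $T=\tau+\rho B$ follows.

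The main obstacle is stating Wedin's theorem with the correct gap and residual conventions. In particular, the $\widehat D$ side must carry the gap $\sigma_r(\widehat D)-\sigma_{r+1}(D)$ rather than a gap computed from $D$ alone. If the precise constants are awkward, I would instead derive the bound directly. Take a unit vector $x\in U^\perp$, so $D^{\top}x=0$. Then write $\widehat D^{\top}x=(\widehat D-D)^{\top}x$, whose norm is at most $\varepsilon$. Since $\widehat D^{\top}$ has singular values at least $\gamma-\varepsilon$ on $\widehat U_r$, it follows that $\|\widehat\Pi_r x\|_2\le\varepsilon/(\gamma-\varepsilon)$. That already yields the bound, even with slack relative to the factor $2$.

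The consequence is then immediate. For a true lift, $\dist(b,U)=0$, so the estimated residual is at most $T$ and the lift is accepted. For a population nonlift with $\dist(b,U)>2T$, the estimated residual exceeds $2T-T=T$, so it is rejected. This requires only that the thresholding convention accepts when the residual is at most $T$.
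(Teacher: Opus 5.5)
Your proposal is correct and follows the paper's proof essentially step for step: the same reverse-triangle split into $\|(I-\widehat\Pi_r)(\widehat b-b)\|_2+\|(\widehat\Pi_r-\Pi_U)b\|_2$, the trivial projector bound for the $\rho=1$ branch, Weyl plus Wedin for the projector distance, and the same two-sided argument for the threshold consequence. Your fallback argument, which uses $\|\widehat D^{\top}x\|_2\le\varepsilon$ for $x\in U^{\perp}$, is a valid elementary substitute for citing Wedin and gives the sharper bound $\varepsilon/(\gamma-\varepsilon)$; this suffices because equal-dimensional projectors satisfy $\|\widehat\Pi_r-\Pi_U\|_2=\|\widehat\Pi_r(I-\Pi_U)\|_2$.
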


\begin{proof}
If $\rho=1$, the projector-distance bound is automatic for orthogonal
projectors.  Otherwise $\varepsilon<\gamma/3$; Weyl's inequalities give
$\sigma_r(\widehat D)\ge\gamma-\varepsilon$ and
$\sigma_{r+1}(\widehat D)\le\varepsilon$, so the leading $r$-subspace is
separated.  Wedin's singular-subspace perturbation theorem then bounds the
projector distance between $U$ and $\widehat U_r$ by the conservative value
$2\varepsilon/(\gamma-\varepsilon)=\rho$
\citep{wedin1972perturbation}.  With projectors $P$ and $\widehat P$,
\begin{align*}
\big|\|(I-\widehat P)\widehat b\|_2-\|(I-P)b\|_2\big|
&\le \|(I-\widehat P)(\widehat b-b)\|_2
   +\|(\widehat P-P)b\|_2\\
&\le \tau+\rho B=T.
\end{align*}
The two classification statements follow by applying this interval around a
population residual of zero or one strictly larger than $2T$.
\end{proof}

The deterministic statement gives the following finite-trace guarantee.

\begin{corollary}[Finite-trace lift/no-lift stability]
\label{cor:finite-trace-stability}
Suppose each of $n$ independent complete traces contributes unbiased
coordinate estimators of $D$ and $b$, with every coordinate supported on an
interval of length at most two.  Under the rank, gap, and norm hypotheses of
Theorem~\ref{thm:stability}, choose $\varepsilon<\gamma$ and $\tau>0$.  If
\begin{equation}
\label{eq:samples}
n\ge\max\left\{
\frac{2pq}{\varepsilon^2}\log\frac{4pq}{\delta},
\frac{2p}{\tau^2}\log\frac{4p}{\delta}
\right\}.
\end{equation}
then, with probability at least $1-\delta$, the rank-$r$ estimated residual
has error at most
$T=\tau+B\min\{1,2\varepsilon/(\gamma-\varepsilon)\}$.  Consequently the
threshold-$T$ decision preserves every population lift and every population
nonlift with residual strictly greater than $2T$.
\end{corollary}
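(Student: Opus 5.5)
The plan is to reduce the corollary to the deterministic Theorem~\ref{thm:stability}. It suffices to show that, with probability at least $1-\delta$, the empirical means $\widehat D$ and $\widehat b$ satisfy $\|\widehat D-D\|_2\le\varepsilon$ and $\|\widehat b-b\|_2\le\tau$. Once both events hold, the theorem gives the residual bound $T$ directly. The threshold consequences then follow verbatim from its last sentence: a population lift has residual $0$, so its estimate is at most $T$, and a nonlift with residual $>2T$ has estimate $>T$.

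For the concentration step, set $\widehat D=\frac1n\sum_{k}D^{(k)}$ and $\widehat b=\frac1n\sum_k b^{(k)}$. These are averages of independent unbiased per-trace coordinate estimators, each supported on an interval of length at most two. Hoeffding's inequality applied to each coordinate gives
\[
\Pr\big(|\widehat D_{ij}-D_{ij}|>s\big)\le 2\exp(-ns^2/2),
\]
since $\sum_k (2/n)^2=4/n$ and the exponent is $-2s^2/(4/n)$. The same bound holds for each coordinate of $\widehat b$.

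The key step is converting coordinatewise control into operator and Euclidean norms. I will use the crude bound $\|E\|_2\le\|E\|_F\le\sqrt{pq}\,\max_{ij}|E_{ij}|$. Taking $s=\varepsilon/\sqrt{pq}$ for every entry of $D$, a union bound over $pq$ entries bounds the failure probability by
\[
2pq\exp\!\big(-n\varepsilon^2/(2pq)\big).
\]
This is at most $\delta/2$ exactly when $n\ge\frac{2pq}{\varepsilon^2}\log\frac{4pq}{\delta}$, which is the first term of \eqref{eq:samples}. Similarly, $\|\widehat b-b\|_2\le\sqrt p\max_i|\widehat b_i-b_i|$ with $s=\tau/\sqrt p$ gives failure probability at most $2p\exp(-n\tau^2/(2p))\le\delta/2$ under the second term. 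A final union bound over the two events yields total failure probability at most $\delta$.

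The only delicate point is checking that the constants in \eqref{eq:samples} match this Hoeffding-plus-union-bound calculation. The support-length-two assumption is what produces the factor $2$ in $2pq/\varepsilon^2$, and splitting $\delta$ in half is what produces the $4$ inside the logarithms. If a constant turns out to be off, the remedy is to recheck the Hoeffding exponent for range $2$, rather than to change the method. A sharper matrix-Bernstein bound is not needed, because the stated sample size already pays the full $pq$ Frobenius factor. Finally, I will note that the event $\|\widehat D-D\|_2\le\varepsilon<\gamma$ also makes the rank-$r$ truncation well defined, as the theorem requires, so no additional condition on $\widehat D$ is needed.
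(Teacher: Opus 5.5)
Your proposal is correct and follows the paper's proof: coordinatewise Hoeffding at tolerances $\varepsilon/\sqrt{pq}$ and $\tau/\sqrt p$, a union bound, $\|\cdot\|_2\le\|\cdot\|_F$, then Theorem~\ref{thm:stability}. Your explicit check also confirms the stated constants: range two gives exponent $-ns^2/2$, and the $\delta/2$ split gives the $4$ inside the logarithms.

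One small aside. The condition $\varepsilon<\gamma$ alone gives $\sigma_r(\widehat D)\ge\gamma-\varepsilon>0$, but it guarantees a strict gap $\sigma_r(\widehat D)>\sigma_{r+1}(\widehat D)$ only when $\varepsilon<\gamma/2$, so the leading subspace need not be unique. This is harmless, because the theorem's $\rho=1$ branch covers any choice.
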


\begin{proof}
Coordinatewise Hoeffding bounds at tolerances
$\varepsilon/\sqrt{pq}$ for $D$ and $\tau/\sqrt p$ for $b$, followed by a
union bound, give
$\|\widehat D-D\|_F\le\varepsilon$ and
$\|\widehat b-b\|_2\le\tau$ with probability at least $1-\delta$.
Since the operator norm is bounded by the Frobenius norm,
Theorem~\ref{thm:stability} applies on that event.
\end{proof}

Coordinates within one trace may be dependent; independence is used only
across complete traces.  Cached duplicates do not increase $n$.
Since the quotient obstruction $[b]\in\bbR^p/\im D$ vanishes exactly when
$\dist(b,U)=0$, Corollary~\ref{cor:finite-trace-stability} is a statistical
stability result for the lift/no-lift decision.  It does not assert equality
of two separated nonzero quotient classes.

Every displayed hypothesis matters.  Without rank truncation,
$D=\operatorname{diag}(1,0)$, $b=e_2$, and
$\widehat D=\operatorname{diag}(1,\varepsilon)$ produce a false lift for every
$\varepsilon>0$.  Without a positive gap the image is not stably identifiable;
without the strict margin the lift and nonlift residual intervals overlap.

\subsection{Finite-trace recovery of restriction-aware ordering}

The preceding theorem stabilizes a fixed linear obstruction.  A complementary
question is whether development traces recover the repair ordering itself.
Let $\cC=\{1,\ldots,K\}$ be a finite candidate pool and let
$\mu_c\in[0,1]^m$ be its population face-agreement vector.  Define the
weakest-face score $h(c)=\min_f\mu_{cf}$.
Here a face may encode either a registered local predicate or an overlap
agreement event, so $h$ is a canonical restriction-aware bottleneck score.
It is not the lexicographic policy frozen in the experiments; the theorem
isolates the sampling obligation for one explicit population objective.

\begin{theorem}[Finite-trace repair recovery]
\label{thm:recovery}
Assume $c^\star$ uniquely maximizes $h$ with gap
\[
\Delta=\min_{c\ne c^\star}\{h(c^\star)-h(c)\}>0.
\]
For every $c$, observe $n$ fresh vectors $Z_{c,t}\in[0,1]^m$ adapted to a
filtration, with fixed conditional mean
$\mathbb E[Z_{c,t}\mid\mathcal F_{t-1}]=\widetilde\mu_c$ and
$\|\widetilde\mu_c-\mu_c\|_\infty\le\eta<\Delta/2$.  Coordinates within one
trace may be dependent.  Put
$\widehat h(c)=\min_f n^{-1}\sum_t Z_{c,t,f}$.  If
\[
n\ge \frac{8}{(\Delta-2\eta)^2}\log\frac{2Km}{\delta},
\]
then $\arg\max_c\widehat h(c)=c^\star$ with probability at least
$1-\delta$.
\end{theorem}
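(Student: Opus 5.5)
The plan is a uniform-deviation argument followed by a deterministic margin argument. For each candidate $c$ and face $f$, consider the centered increments $Z_{c,t,f}-\widetilde\mu_{cf}$. They form a martingale difference sequence with respect to $\mathcal F_t$, bounded in an interval of length one. The filtration hypothesis is exactly what Azuma--Hoeffding needs, and it avoids assuming independence across traces or across coordinates within a trace. Writing $\bar Z_{cf}=n^{-1}\sum_t Z_{c,t,f}$, Azuma--Hoeffding gives
\[
\Pr\bigl(|\bar Z_{cf}-\widetilde\mu_{cf}|>s\bigr)\le 2\exp(-2ns^2)
\]
for increments supported in $[0,1]$ shifted by a predictable constant. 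The more conservative form $2\exp(-ns^2/2)$ comes from the range $[-1,1]$ bound. A union bound over the $Km$ pairs $(c,f)$ controls all coordinates at once. Within-trace dependence across $f$ is harmless because the union bound never uses a joint law.

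The tolerance will be $s=(\Delta-2\eta)/4$. With the conservative Azuma constant, the failure probability is $2Km\exp(-ns^2/2)=2Km\exp\bigl(-n(\Delta-2\eta)^2/32\bigr)$. This does not yet match the stated $8/(\Delta-2\eta)^2$. The better route is the sharp Hoeffding--Azuma constant $\exp(-2ns^2)$ for increments with conditional range of length one. That yields $2Km\exp\bigl(-n(\Delta-2\eta)^2/8\bigr)\le\delta$ under the stated sample size, so on an event of probability at least $1-\delta$ every coordinate satisfies $|\bar Z_{cf}-\widetilde\mu_{cf}|\le(\Delta-2\eta)/4$. Checking that the constants line up with this sharp form is the main bookkeeping risk. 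If they fail, I would fall back to $s=(\Delta-2\eta)/2$ with a strict inequality, which the stated $n$ also supports.

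Next, transfer the bound through the minimum. The map $x\mapsto\min_f x_f$ is $1$-Lipschitz in $\|\cdot\|_\infty$. Combining the estimation error with the bias $\|\widetilde\mu_c-\mu_c\|_\infty\le\eta$ gives, for every $c$,
\[
|\widehat h(c)-h(c)|\le \eta+s.
\]

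Finally, separate the winner. For any $c\ne c^\star$,
\[
\widehat h(c^\star)-\widehat h(c)\ge h(c^\star)-h(c)-2(\eta+s)\ge \Delta-2\eta-2s>0,
\]
which holds whenever $s<(\Delta-2\eta)/2$. So on the good event the argmax is uniquely $c^\star$.

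The main obstacle is the concentration step. The observations need not be independent, so the argument must rely on the fixed conditional mean, which turns the centered sums into martingales. It must also match the constant $8$ in the sample-size bound, which requires tracking the range of the increments carefully.
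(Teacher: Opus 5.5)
Your proposal is correct and follows the paper's proof essentially step for step. Both use the conditional Hoeffding--Azuma bound $2\exp(-2ns^2)$ at tolerance $s=(\Delta-2\eta)/4$, which is valid because each centered increment lies in the deterministic length-one interval $[-\widetilde\mu_{cf},1-\widetilde\mu_{cf}]$; a union bound over the $Km$ face coordinates then gives $2Km\exp(-n(\Delta-2\eta)^2/8)\le\delta$, and the $1$-Lipschitz minimum yields the margin $(\Delta-2\eta)/2>0$. Your fallback also works: the cruder bound $2\exp(-ns^2/2)$ at $s=(\Delta-2\eta)/2$, applied to $\{|\bar Z_{cf}-\widetilde\mu_{cf}|\ge s\}$, gives the same sample size and a strictly positive, though quantitatively slack-free, margin.
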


\begin{proof}
Let $e=(\Delta-2\eta)/4>0$.  Coordinatewise conditional
Hoeffding--Azuma bounds \citep{hoeffding1963probability,azuma1967weighted}
and a union bound give
\[
\Pr\!\left(
\max_{c,f}|\widehat\mu_{cf}-\widetilde\mu_{cf}|>e
\right)
\le 2Km\exp(-2ne^2)\le\delta.
\]
On the complementary event,
$\|\widehat\mu_c-\mu_c\|_\infty\le e+\eta$.  Since the minimum is
one-Lipschitz in max norm, every rival satisfies
\[
\widehat h(c^\star)-\widehat h(c)
\ge \Delta-2(e+\eta)
=\frac{\Delta-2\eta}{2}>0.
\]
Thus the empirical maximizer is uniquely $c^\star$.
\end{proof}

The theorem permits arbitrary dependence among faces of the same trace but
requires freshness across complete traces.  Cached copies do not increase
$n$.  More importantly, it is an identification-conditional result:
concentration cannot turn a misspecified face vocabulary into a causal
restriction model.

\section{Trace Construction and Baseline Experiment}
\label{sec:experiment}

\subsection{Generated repository task}

Each task contains two migration tickets and four same-namespace adapter files
per ticket. Exactly one file is owned by the deployment. A valid response must
edit that file, install the dependency-declared API revision, place migration
before caller, preserve the owned source commit, and select the public contract
test. The model emits only bounded JSON edit operations naming existing tasks,
targets, paths, and typed values. Invalid paths, fields, targets, and values are
rejected. A separate Python process applies the edits and runs public tests; no
model-supplied program is imported or executed.

The five fillers change the actual evidence available to the model: symbol/path
retrieval, path/commit provenance, deployment/workspace routing, dependency
order, and public-test feedback. The model never sees a complete plan identifier
or hidden expected patch.

\subsection{Trace-induced maps}

A disjoint map-training split evaluates 25 candidates on four tasks with two
targets each, producing 200 typed target traces. Registered vertex fields are
selected from the task schema; filler-stage nomination maps are estimated from
the training labels; and the incidence graph is the maximum-weight spanning
tree plus the two strongest redundant edges. This yields six overlaps and
3,600 zero-error projection-composition checks.

Before confirmatory outcomes were opened, an independent reconstruction
enumerated all 25 structural boundaries on eight fresh tasks and 16 targets.
All 7,200 direct-versus-composed restriction coordinates agree. The resulting
cochain dimensions are reported in Section~\ref{sec:model}. This verifies
implementation functoriality for the declared fields; heldout calibration tests
whether those fields retain semantic information.

\subsection{First frozen equal-pool design}

The experiment uses the exact API identifiers
\url{deepseek-ai/DeepSeek-V4-Flash} and
\url{zai-org/GLM-5-FP8}; the operator attests that the latter serves GLM-5.2.
Four fresh screen--heldout task replicates are evaluated by both endpoints.
Every model--task block evaluates the same 25 candidates once, giving 200
candidate evaluations. Public-test failure permits at most one correction call,
so the final matrix contains 272 model calls.

The primary budget is candidate evaluations to the first bundle passing both
screen and heldout tasks. Secondary resource measures are calls to success and
provider-reported prompt, completion, and total tokens to success. Token totals
include correction calls and are compared between policies only within an exact
endpoint; they measure model-mediated token effort, not elapsed time. Other
metrics are joint heldout success, successful filler cost, non-target
regressions, screen-to-heldout regressions, obstruction calibration, and
structural computation.

The registered primary contrast is within-block adaptive-NSGA-II minus
full-class candidate evaluations, tested by an exact one-sided sign test. The
eight endpoint--task measurements share four task replicates, so we additionally
average endpoints within task before a conservative cluster sensitivity.

\subsection{Baseline search results}

Seventy-one of 200 candidate--block rows pass heldout tests, and all 71 are
exact joint successes: 35 for DeepSeek and 36 for GLM. These endpoint counts are
descriptive, not model-population estimates.

\begin{figure}[t]
\centering
\includegraphics[width=0.98\textwidth]{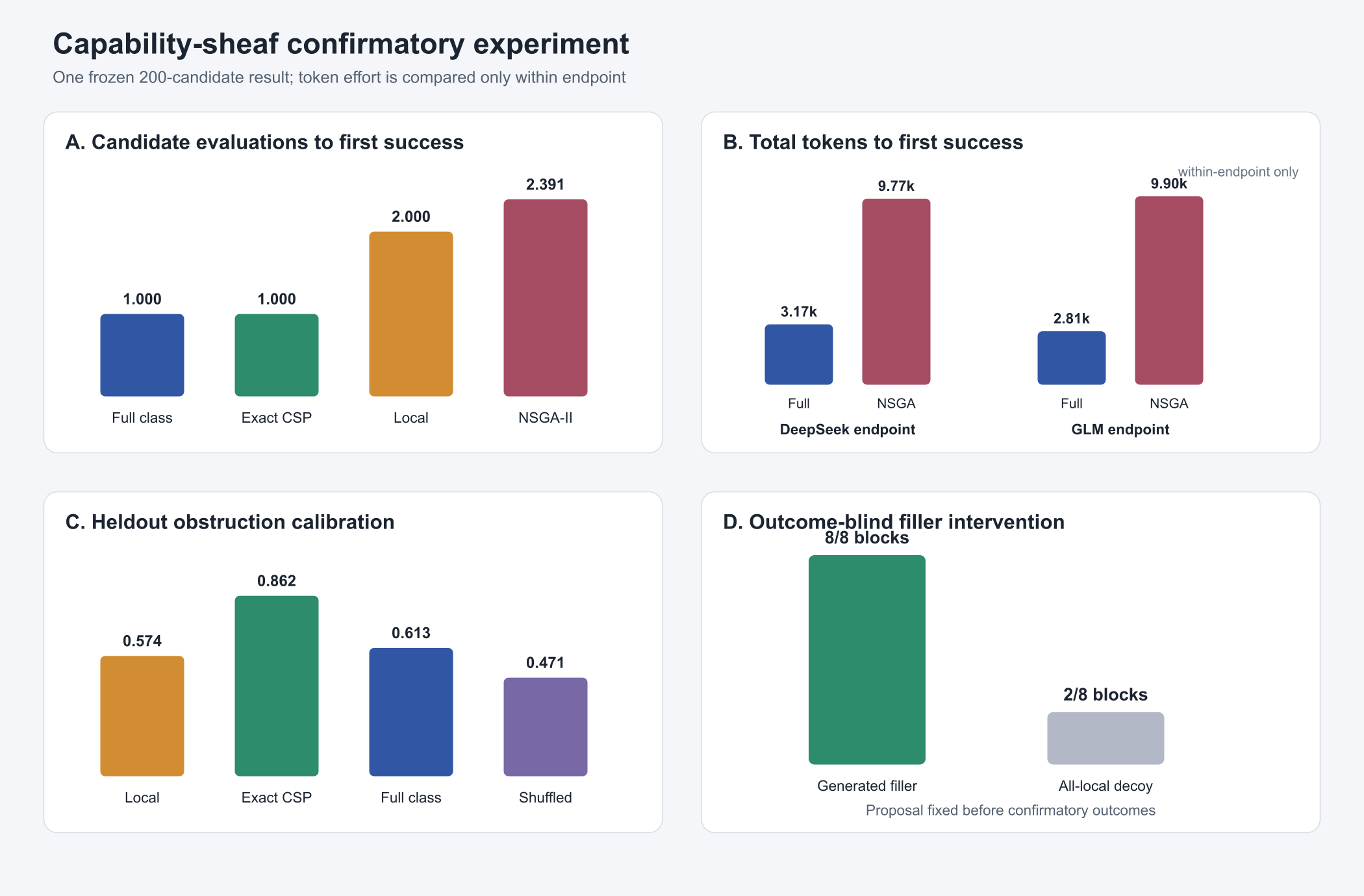}
\caption{One confirmatory experiment. Panel A reports the equal-pool stopping
budget. Panel B reports total tokens to first success separately inside each
endpoint. Panel C shows heldout balanced accuracy of structural criteria.
Panel D compares the outcome-blind generated filler with the registered
all-local structural decoy.}
\label{fig:experiment}
\end{figure}

\begin{table}[t]
\centering
\small
\caption{Equal-pool search. Candidate evaluations are primary; calls include
public-test corrections. Lower values, regressions, and filler cost are better.}
\label{tab:policy}
\begin{tabular}{lrrrrr}
\toprule
policy & success & evaluations & calls & regressions & filler cost\\
\midrule
full relative class & 1.000 & \textbf{1.000} & \textbf{1.000} & \textbf{0.000} & 6.000\\
exact CSP & 1.000 & \textbf{1.000} & \textbf{1.000} & \textbf{0.000} & 6.000\\
pairwise proxy & 1.000 & 2.000 & 3.000 & \textbf{0.000} & 6.000\\
local marginal & 1.000 & 2.000 & 3.000 & \textbf{0.000} & 6.000\\
shuffled class & 1.000 & 2.000 & 2.250 & \textbf{0.000} & \textbf{5.000}\\
adaptive NSGA-II & 1.000 & 2.391 & 2.922 & 3.156 & 5.633\\
random & 1.000 & 2.391 & 2.922 & 3.156 & 5.633\\
\bottomrule
\end{tabular}
\end{table}

For the primary contrast, seven NSGA-II-minus-full block differences equal
$1.25$ candidate evaluations and the remaining difference equals $2.375$.
All favor full class, with mean difference $1.390625$ and one-sided
$p=0.00390625$. Averaging the two
endpoints within each shared task gives differences
$(1.8125,1.25,1.25,1.25)$ and $p=0.0625$. Thus the direction holds in every
task and endpoint, while task-level replication remains small.

The outcome-blind cost-3 proposal passes exactly in all eight blocks with one
call per block and no heldout non-target regression. The registered all-local
decoy passes only two blocks. This closes the loop from typed obstruction,
through filler generation and recomputation, to executable heldout behavior.

\subsection{Token effort and calibration}

\begin{table}[t]
\centering
\small
\caption{Mean provider-reported total tokens to first success. Values are
policy comparisons within endpoint; absolute counts are not ranked across
different tokenizers.}
\label{tab:tokens}
\begin{tabular}{lrr}
\toprule
policy & DeepSeek endpoint & GLM endpoint\\
\midrule
full relative class & \textbf{3,172.5} & \textbf{2,808.5}\\
exact CSP & \textbf{3,172.5} & \textbf{2,808.5}\\
local marginal & 10,763.5 & 9,591.0\\
shuffled class & 3,626.5 & 11,664.0\\
adaptive NSGA-II & 9,770.2 & 9,901.5\\
\bottomrule
\end{tabular}
\end{table}

Relative to NSGA-II, full-class search reduces total tokens to success by
67.5\% for DeepSeek and 71.6\% for GLM; completion-token reductions are 63.2\%
and 68.1\%. Exact CSP has identical stopping candidates and therefore identical
token effort. The token result is consequently evidence for global relational
information, not for an incremental cohomological advantage.

Heldout balanced accuracy is 0.574 for local-only, 0.862 for exact CSP, 0.613
for full relative class, and 0.471 for shuffled class. Full class has
specificity 1.0 but sensitivity 0.225; exact CSP is the strongest compatibility
predictor. Structural feature computation takes a mean 4.45 ms per task
replicate on the artifact host, excluding the one-time human and data cost of
choosing the vocabulary and training the restriction map.

\FloatBarrier

\section{Hidden-Interior Quotient Experiment}
\label{sec:hidden-experiment}

\subsection{Frozen design and causal prediction}

We froze this experiment before making any new model call. It uses the same two
model endpoints and the same pool of 25 filler bundles as the baseline study.
The new matrix has 20 independent task clusters. Each cluster has a fresh
screen task, a fresh heldout task, and a separate model seed. Both endpoints
evaluate every candidate in every cluster. The final matrix therefore contains
1,000 candidate outcomes. Public-test feedback adds at most one correction
call, giving 1,360 model calls in total.

The structural score uses the subdivided complex from
Section~\ref{sec:hidden-quotient}. Every registered overlap coordinate has one
hidden mediator. We evaluate four outcome-blind interventions on that mediator:
stale, aligned-left, aligned-right, and randomized. In all 4,000 registered
candidate--target--intervention checks, the quotient signature is unchanged.
The quotient and stale-raw rankings differ at 24 of 25 positions in every task
cluster.

The preregistered primary policy minimizes quotient obstruction weight, then
uses interior-completion rate, exact-lift rate, nomination margin, and filler
cost. The stale-raw policy scores the two half-edge residuals before the
quotient and then uses local coverage and cost. This is a targeted stress test:
the stale mediator is distinct from both public endpoint values, so a raw score
can treat a compatible boundary as inconsistent. The aligned-left policy is
the negative control. In that condition, the raw score already tracks the
endpoint mismatch.

The independent statistical unit is the task cluster. For each cluster, we
average candidate evaluations to first joint success over the two endpoints.
The frozen primary contrast is stale-raw minus quotient. We use an exact
one-sided paired sign test over the 20 clusters. Exact CSP is the positive
semantic control. Pairwise, local, adaptive NSGA-II, and random search use the
same candidate outcomes. NSGA-II and random search each use four registered
search seeds per model--cluster block.

\subsection{Primary result and ablation}

The 1,000 outcomes contain 278 candidates that pass both screen and heldout
tests. The quotient policy reaches the first success in 1.000 candidate
evaluation in every model--cluster block. The stale-raw policy needs 2.000.
After averaging endpoints, every task cluster has paired difference $+1$.
Thus all 20 non-tied clusters favor the quotient. The exact one-sided sign-test
value is
\[
p=2^{-20}=9.5367\times10^{-7}.
\]
The mean paired difference is 1.000 candidate evaluation; the cluster
bootstrap 95\% interval is $[1.000,1.000]$. Equivalently, quotienting reduces
the candidate stopping budget by 50\% relative to the stale raw
representative.

\begin{figure}[t]
\centering
\includegraphics[width=0.99\textwidth]{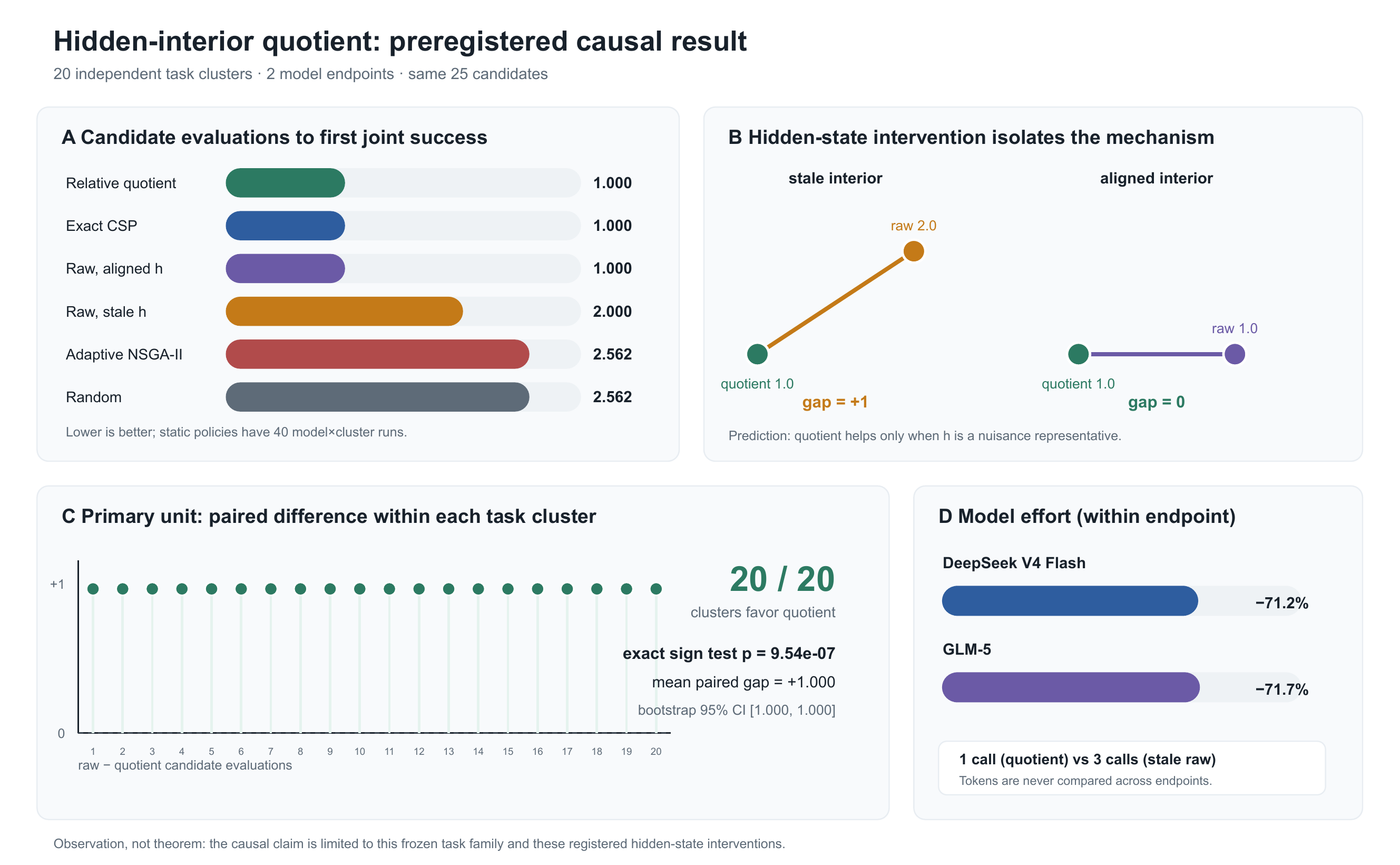}
\caption{Preregistered hidden-interior result. Panel A shows the equal-pool
stopping budget. Panel B is the causal check: the stale representative creates
a gap, while aligning the hidden state removes it. Panel C shows all 20 paired
task-cluster differences, not candidate-level pseudo-replicates. Panel D shows
total-token reductions within each endpoint. The figure is computational
evidence for this frozen task family, not a theorem about arbitrary harnesses.}
\label{fig:hidden-results}
\end{figure}

The aligned-left raw policy also reaches success in 1.000 evaluation. Its
paired difference from the quotient is zero in every cluster. This ablation is
important: it shows that the gain is tied to a nuisance interior
representative. It is not a generic reward for adding a cohomological label.
Exact CSP also matches the quotient in every cluster, as predicted.

\begin{table}[t]
\centering
\small
\caption{Hidden-interior experiment. All policies reach success. Candidate
evaluations are the primary budget. Token means are compared only within an
endpoint. Static policies have 40 model--cluster runs; NSGA-II and random have
160 runs because they use four search seeds.}
\label{tab:hidden-policy}
\begin{tabular}{lrrrr}
\toprule
policy & evaluations & calls & DeepSeek tokens & GLM tokens\\
\midrule
relative quotient & \textbf{1.000} & \textbf{1.000} & \textbf{3,050.8} & \textbf{2,706.5}\\
exact CSP & \textbf{1.000} & \textbf{1.000} & \textbf{3,050.8} & \textbf{2,706.5}\\
raw, aligned interior & \textbf{1.000} & \textbf{1.000} & 3,088.3 & 2,784.5\\
raw, stale interior & 2.000 & 3.000 & 10,587.3 & 9,548.4\\
pairwise proxy & 2.000 & 3.000 & 10,587.3 & 9,548.4\\
local marginal & 2.000 & 3.000 & 10,587.3 & 9,548.4\\
adaptive NSGA-II & 2.562 & 3.837 & 12,768.4 & 12,011.6\\
random & 2.562 & 3.837 & 12,768.4 & 12,011.6\\
\bottomrule
\end{tabular}
\end{table}

Within endpoint, the quotient reduces total tokens to success by 71.2\% for
DeepSeek and 71.7\% for GLM relative to the stale-raw policy. It reduces model
calls from 3.000 to 1.000. These are within-endpoint effort comparisons; they
do not rank the two model families. The equality with exact CSP sets the claim
boundary: the experiment shows value from quotienting hidden-state nuisance
over a fixed raw representative, not value beyond an equally informed exact
solver.

\section{Real-Repository Stress Test}
\label{sec:real}

\subsection{Benchmark, split, and outcome firewall}

We next test whether the controlled effect transfers to real patch fusion. We
use the seven-source SWE-bench Multilingual pool from PatchFuseBench
\citep{yang2026patchfusion,jimenez2024swebench}. The pool contains 300 GitHub
issues. We split the 41 eligible repositories, not individual issues: 20
repositories and 160 issues form the development split; 21 repositories and
140 issues form a sealed confirmatory split. This section uses only the
development split.

The 160 issues contain 1,120 possible source slots. Twenty-three slots are
missing, but no issue is removed. Exact duplicate patches are collapsed only
after their provenance is retained. The resulting pool has 875 unique
source-compatible patches. We decompose them against the base commits into
2,579 lossless edit atoms. Each atom keeps its file, base interval, old bytes,
new bytes, mode change, and source-patch support. The fixed conflict graph has
2,022 interval-overlap pairs and 109 source-local semantic-alternative pairs.

GLM-5 assesses each atom against the issue, a source-derived rubric, and the
local base context. It never receives benchmark tests, solved labels, or the
gold patch. A strong repository-held-out selector supplies the same anchor to
all methods. It resolves 127/160 issues. The matched router receives atom
evidence, candidate membership, conflicts, and the anchor. The relative router
receives the same inputs plus the signed relative block. The exact control
uses the same evidence in a deterministic binary optimization.

If a method returns an unchanged pool patch, we use its corrected
content-addressed verdict. Every new patch is run with the official pinned
SWE-bench harness. The final evaluation has 153/153 completed fresh-patch
rows and no infrastructure failures. Incomplete Docker calls are not counted
as failed tests.

\subsection{Identifiability counterexample and repair}

The first atom construction uses one full-pool matrix
\[
D:\bbR^{m}\longrightarrow\bbR^{p+q+2},
\]
where the columns are all available atoms, the first $p$ rows are issue
obligations, the next $q$ rows are conflict interfaces, and the final rows are
build and regression risk. For target $b$, the router receives the canonical
representative of $[b]$ in $\operatorname{coker}D$.

This class is global to the issue, not specific to a binary selection. For any
two selections $x_1,x_2$,
\begin{equation}
\label{eq:pool-invariance}
[b-Dx_1]-[b-Dx_2]=[D(x_2-x_1)]=0
\quad\text{in }\operatorname{coker}D.
\end{equation}
Thus the class cannot rank $x_1$ against $x_2$. The least-squares atom
potentials may still influence a language model, but they are not different
cohomology classes for different patches.

We repair this defect by indexing the complex by the candidate action. If
patch $S$ contains atom set $S$, let $D_S$ contain only those hidden interior
columns and the common obligation-and-risk rows. Its score is
\begin{equation}
\label{eq:candidate-quotient}
q(S)=
\frac{\lVert(I-D_SD_S^+)b\rVert_2}{\lVert b\rVert_2}.
\end{equation}
The matched control uses the same atom evidence but replaces the span and
projection by coordinate-wise maximum support and maximum risk. Exact
candidate scoring is a second control. Of 875 candidate complexes, 848 have
positive interior rank and positive quotient dimension. The value $q(S)$
varies across candidates on 120/160 issues.

\begin{figure}[t]
\centering
\includegraphics[width=0.99\textwidth]{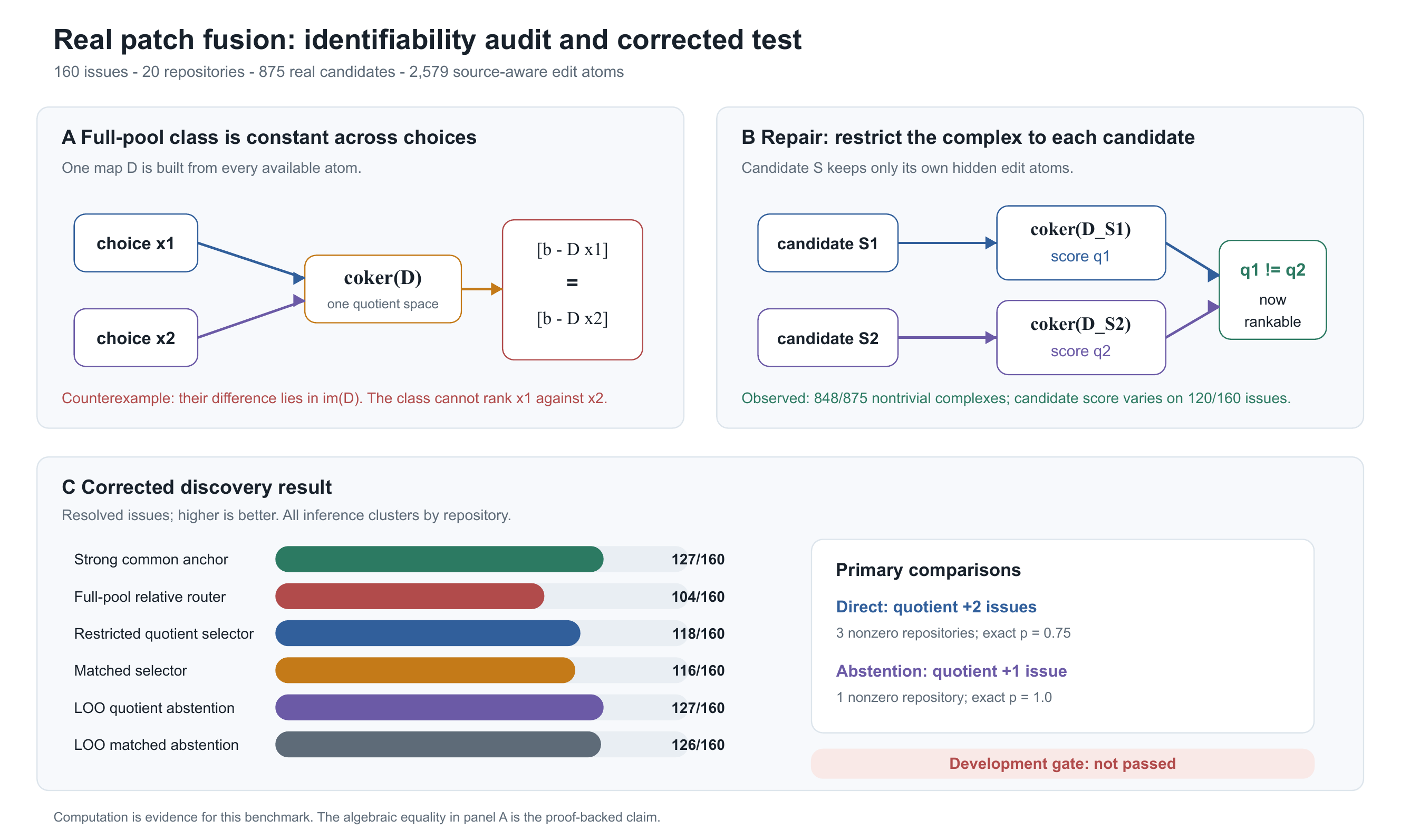}
\caption{Real patch-fusion audit. Panel A is a finite and matrix
counterexample: the full-pool class is identical for all selections; this is
the proof-backed statement in Equation~\eqref{eq:pool-invariance}. Panel B
shows the candidate-indexed repair. Its different scores are an algebraic
computation, not a correctness theorem. Panel C reports the empirical
development results. The corrected quotient is discriminative, but neither
direct selection nor abstention passes the repository-level gate.}
\label{fig:real-audit}
\end{figure}

\subsection{Results}

Table~\ref{tab:real-results} separates three questions. Full-pool routing asks
the model to construct one atom subset. Candidate-restricted selection chooses
one submitted patch. Repository-held-out abstention decides whether to keep
the strong anchor.

\begin{table}[t]
\centering
\small
\caption{Real-repository development results. ``Changed'' counts outputs that
differ from the common anchor. ``New'' counts patches absent from the submitted
pool. All denominators are 160 issues.}
\label{tab:real-results}
\begin{tabular}{P{0.20\textwidth}P{0.29\textwidth}rrr}
\toprule
stage & method & resolved & changed & new\\
\midrule
\multirow{4}{*}{\shortstack[l]{full-pool atom\\routing}}
 & common anchor & \textbf{127} & 0 & 0\\
 & matched router & 109 & 79 & 26\\
 & relative router & 104 & 85 & 37\\
 & exact semantic control & 96 & 120 & 112\\
\midrule
\multirow{4}{*}{\shortstack[l]{candidate-\\restricted}}
 & common anchor & \textbf{127} & 0 & 0\\
 & matched selector & 116 & 122 & 0\\
 & restricted quotient & 118 & 123 & 0\\
 & exact semantic selector & 119 & 122 & 0\\
\midrule
\multirow{3}{*}{LOO abstention}
 & common anchor & \textbf{127} & 0 & 0\\
 & matched gate & 126 & 6 & 0\\
 & quotient gate & \textbf{127} & 7 & 0\\
\bottomrule
\end{tabular}
\end{table}

The full-pool relative router resolves 104 issues and the matched router
resolves 109. Their paired difference is $-5/160$; the repository-macro
difference is $-0.0316$, the cluster bootstrap 95\% interval is
$[-0.0838,0.0102]$, and the exact repository sign-flip value is $p=0.28125$.
No new fused patch from any of the three methods turns an anchor failure into
a success. This rules out a real fusion gain for the present atom evidence and
router.

The candidate-indexed quotient is better than its matched control by two
issues: four paired wins and two losses. The effect occurs in only three
repositories and has exact $p=0.75$. It also remains nine issues below the
anchor. A symmetric leave-one-repository-out abstention procedure learns when
to keep the anchor. The quotient gate recovers 127/160, while the matched gate
gets 126/160. Their only discordant issue favors the quotient, but one nonzero
repository gives $p=1.0$.

The preregistered development gate requires at least four issue gains, a
positive repository-macro effect, six nonzero repositories, and
$p\leq0.2$. Neither corrected experiment passes. We therefore do not open the
21 confirmatory repositories. Further threshold searches on the same outcomes
would be post-selection bias.

\section{Claim Boundary and Threats to Validity}
\label{sec:limits}

The controlled studies use generated JSON, a bounded edit language, and compact
public tests. Their purpose is mechanism identification. They do not estimate
performance on natural repositories. The baseline also has only four
independent task clusters. The hidden-state study has 20 clusters, but its stale
mediator is a deliberate intervention, not an estimate of how often deployed
systems contain stale state.

The aligned ablation supports the controlled mechanism: when the nuisance
representative is removed, the quotient gap disappears. Exact CSP also matches
the quotient. Thus the controlled evidence supports invariance to hidden-state
choice, not superiority over an equally informed exact solver.

The real-repository study improves external validity but remains one fixed
candidate pool. Its seven sources, issue distribution, and strong anchor may
not represent other generation systems. Atom boundaries are lossless at the
diff level, but the obligation and risk evidence comes from one GLM-5 pass.
Independent GLM-5 and Qwen hunk studies earlier in development showed material
evidence-model dependence. Better semantic interfaces, generated tests, or
new edit bytes might change the result.

Repositories are the statistical clusters. Candidate patches, model sources,
and atom rows within a repository are not independent samples. The direct
candidate quotient differs from its matched control in only three repositories;
the abstention comparison differs in one. These designs cannot support a
small repository-level $p$-value, even though they include 160 issues.

The exact atom optimization is not computationally hard on this workload.
HiGHS solves all 2,579 binary atom variables, including an issue with 338
variables, in 3.45 seconds after presolve. We therefore make no speed claim
from the size of the naive binary grid. Likewise, quotient vanishing is not
binary semantic feasibility.

Provider-reported tokens in the controlled studies measure effort within one
endpoint's tokenizer and accounting rules. They do not rank model families.
The real study also excludes the human cost of defining rubrics, checking
source boundaries, and auditing harness failures.

The real development protocols were fixed before their corresponding scores
were joined to outcomes, but they followed earlier negative development
results. Their $p$-values are exploratory. The confirmatory repositories remain
sealed because the development gate failed. A positive claim requires a new
method fixed before those outcomes are opened.

\section{Related Work}
\label{sec:related}

Classical sheaf theory formalizes local data and gluing
\citep{bredon1997sheaf,maclane1992sheaves}; cellular sheaves and their Laplacians
provide finite computational models \citep{curry2014sheaves,hansen2019spectral}.
Sensor integration illustrates how local consistency can diagnose distributed
data \citep{robinson2017canonical}.  We use the same mathematical machinery for
typed agent behavior rather than geometric measurements.

Recent expository work surveys the path from finite-poset sheaves to machine
learning \citep{ayzenberg2025sheaf}, while a sheaf-theoretic account of tasks in
distributed systems treats solvability through compatible local views
\citep{felber2025tasks}.  Our task object is narrower: a frozen finite harness
candidate and its executable typed trace.

Knowledge sheaves express schema-constrained knowledge-graph embeddings as
approximate global sections \citep{gebhart2023knowledge}.  More directly,
\citet{olivieri2026theoryshift} rank finite scientific-theory transitions by
transport and gluing obstruction in a controlled AI-agent benchmark.  Our
setting differs in its object of intervention and validation: fillers change
repository operations, models synthesize bounded edits, public tests execute,
and search policies are compared behind a prospective outcome firewall.

The sheaf-theoretic account of contextuality characterizes locally consistent
families without global sections \citep{abramsky2011sheaf}.  Cohomological
witnesses and known false negatives motivate our exact-plus-linear hierarchy
\citep{abramsky2012cohomology,abramsky2017paradox,caru2017cohomology}.  Connections
between global sections and robust CSP further support the exact semantics
\citep{abramsky2013robust,oconghaile2022cohomology}.

Neural sheaf diffusion and connection-Laplacian networks learn sheaf-valued
representations on graphs \citep{bodnar2022neural,barbero2022connection}.  Our
object is instead an executable capability abstraction used to repair an agent
harness.  Persistent homology and stable vectorizations study multiscale
topological summaries \citep{carlsson2009topology,cohensteiner2007stability,
adams2017persistence}; persistence is complementary to, but not needed for, the
finite relative obstruction tested here.

Automated prompt, program, workflow, and agent optimization motivate the outer
loop \citep{khattab2023dspy,hu2025adas,zhang2025aflow,lee2026metaharness,
ursekar2026vero}.  HarnessFix is especially close in using failed trajectories
to localize harness flaws and validate scoped repairs \citep{chen2026harnessfix}.
Our contribution is not a new general optimizer.  It is a
structural observation channel and a typed repair ordering that can sit inside
such optimizers when local capability signatures are insufficient.

PatchFusion studies the same post-generation decision problem as our real
stress test \citep{yang2026patchfusion}. It uses deterministic repeated-atom
evidence and reports 236/300 solved issues on its SWE-bench Multilingual pool.
We reuse that fixed candidate pool but ask a different question: does a
hidden-state quotient add information beyond matched semantic evidence? Our
negative result does not contradict PatchFusion's deterministic fusion result.
It shows that the present cohomological construction does not improve it.

Table~\ref{tab:closest-work} makes the closest distinctions explicit
\citep{khattab2023dspy,hu2025adas,zhang2025aflow,lee2026metaharness,
ursekar2026vero,chen2026harnessfix,olivieri2026theoryshift,
yang2026patchfusion}.

\begin{table}[t]
\centering
\caption{Closest systems by intervention object, structural authority, and
validation.  ``Not declared'' means only that the cited work does not use the
exact sheaf/CSP object studied here.}
\label{tab:closest-work}
\begin{tabular}{P{0.16\textwidth}P{0.22\textwidth}P{0.27\textwidth}P{0.22\textwidth}}
\toprule
work & object changed or diagnosed & structural authority & validation\\
\midrule
DSPy / AFlow / ADAS & prompts, programs, workflows, agent designs & task objective and optimizer state; no declared capability sheaf & heldout task scores\\
Meta-Harness & executable harness code & proposer access to source, prior scores, and traces & application and agent benchmarks\\
VeRO & versioned target agents & versions, rewards, observations, and budget ledger & reproducible budget-controlled evaluation\\
HarnessFix & trajectory-localized harness flaws and patches & trace IR with provenance and control flow & scoped patch validation and heldout tests\\
PatchFusion & one patch from a fixed multi-agent pool & deterministic repeated edit-atom evidence & official repair benchmarks\\
Olivieri--Hern\'andez & scientific theory-transition candidates & chart transport and gluing obstruction & controlled transition-card benchmark\\
this work & harness fillers and fixed-pool patches & finite stalks, exact CSP, and hidden-state relative quotients & public tests and official SWE-bench execution\\
\bottomrule
\end{tabular}
\end{table}
\FloatBarrier

\section{Conclusion}

A capability sheaf separates local usefulness from shared-state agreement. Its
exact CSP is the semantic decision rule. A relative class can add an invariant
diagnostic, but it cannot replace exact feasibility or execution.

The controlled hidden-state experiment works as intended. Quotienting a stale
interior representative reduces the candidate budget from 2.000 to 1.000 in
all 20 clusters. Aligning the hidden state removes the gap, and exact CSP
matches the quotient. This is evidence for an invariance mechanism.

The real-repository stress test gives a different result. The first full-pool
class is identical for every atom selection and therefore cannot rank them.
The candidate-indexed repair fixes this mathematical defect and produces
nontrivial, varying scores on most of the real pool. It gives a small advantage
over its matched selector, but the effect is not distributed across enough
repositories. Repository-held-out abstention ties the strong anchor rather
than improving it.

We therefore do not claim a real-world cohomological advantage. The main
scientific result is a boundary: hidden-state quotienting succeeds in the
controlled intervention, while the tested obligation-and-risk complex is too
weak for real patch fusion. A stronger construction must encode semantic
interfaces that actually glue across edits, rather than attach one global
class to an issue. Such a method should be frozen before the sealed
confirmatory repositories are opened.

\acks{The accompanying artifact contains the controlled protocols, all 1,200
controlled metadata/outcome pairs, the real
discovery split and content-addressed candidate pool, 153 completed fresh-patch
evaluations, exact tests, source hashes, regenerated figures, and the original
hash-bound runtime. No endpoint credential is included.}

\appendix

\section{Reproducibility Checklist}

The standalone repository contains:
\begin{itemize}
  \item \path{src/capability_sheaves}: task generation, exact CSP, hidden-state
  and candidate-indexed quotient construction, repository-clustered analysis,
  and figure rendering;
  \item \path{configs}: controlled protocols, real-benchmark development
  protocols, restriction maps, and revealed task-generation secrets;
  \item \path{data/raw}: 200 baseline metadata/outcome pairs with complete
  provider usage and hash bindings;
  \item \path{data/latent_quotient/raw}: 1,000 hidden-state metadata/outcome
  pairs, materialized before the corresponding external outcomes;
  \item \path{data/map_training}: the disjoint trace split used to construct the
  restriction map;
  \item \path{data/processed}: the frozen aggregate and independent cochain
  reconstruction;
  \item \path{results/swebench_real}: the discovery split, label and harness
  audits, atom inventory, source-aware evidence, fresh-patch checkpoints,
  candidate-indexed analyses, and publication figure;
  \item \path{results}: baseline outputs plus the frozen hidden-state plan,
  analysis, report, and other publication figures;
  \item \path{provenance}: the immutable 200-job plan and byte-identical
  hash-bound execution runtime; and
  \item \path{tests}: scientific, token-accounting, figure, and provenance
  invariants.
\end{itemize}

The readable package is a post-execution refactor. Original machine identifiers
remain only under \path{provenance} where changing them would invalidate frozen
hashes. They do not denote additional empirical studies in this paper.

\section{Additional Counterexamples}

\paragraph{Compatibility without local existence.}
If a required stage has no selected section, every registered equality among
the remaining stages may hold.  A pairwise all-ones vector therefore does not
imply Theorem~\ref{thm:gluing}'s local premise.

\paragraph{Linear vanishing without an executable candidate.}
The signatures $(1,0)$ and $(0,1)$ span $(1,1)$ over $\bbF_2$, but no member of
the portfolio realizes $(1,1)$.  This is why exact candidate membership follows
the quotient computation.

\paragraph{One pool-level class for every configuration.}
Fix the full atom map $D$ and target $b$. For any binary selections $x_1,x_2$,
the representatives $b-Dx_1$ and $b-Dx_2$ differ by
$D(x_2-x_1)\in\im D$. They therefore define the same class in
$\operatorname{coker}D$. A configuration-specific comparison must change the
admissible interior map, for example by using $D_{S_1}$ and $D_{S_2}$.

\paragraph{Small perturbation and rank inflation.}
For $D=\operatorname{diag}(1,0)$ and $b=e_2$, the target is distance one from
$\im D$.  The matrix
$\widehat D=\operatorname{diag}(1,\varepsilon)$ is arbitrarily close to $D$ but
has full image, making the untruncated residual zero.  Registered-rank
truncation in Theorem~\ref{thm:stability} is not optional.

\bibliography{references}

\end{document}